\DeclareMathOperator*{\argmin}{argmin}
\DeclareMathOperator*{\argmax}{argmax}
\newcommand\norm[1]{\lVert #1 \rVert}
\newcommand\prox[1]{\operatorname{prox}_{#1}}
\newcommand\proj[1]{\operatorname{P}_{#1}}
\newcommand\minimize[1]{\underset{#1}{\operatorname{minimize}}}
\newcommand\st{\operatorname{subject\ to}}
\newcommand\sign{\operatorname{sign}}
\newcommand\pderivative[2]{\frac{\partial #1}{\partial #2}}
\newcommand\ind[1]{\mathbb{I}(#1)}
\newtheorem{theorem}{Theorem}
\newtheorem{lemma}{Lemma}
\newtheorem{proposition}{Proposition}
\newtheorem{corollary}{Corollary}
\newenvironment{proof}{\begin{IEEEproof}}{\end{IEEEproof}}
\newcommand\X{\bm{X}}
\newcommand\1{\bm{1}}
\newcommand\T{\mathsf{T}}
\newcommand\sJ{\mathcal{J}}
\newcommand\sM{\mathcal{M}}
\newcommand\sL{\mathcal{L}}
\newcommand\R{\mathbb{R}}
\begin{document}
%
\title{The fastest $\ell_{1,\infty}$ prox in the west}

\author{Benjam\'in~B\'ejar,~
        Ivan Dokmani\'c,~\IEEEmembership{Member,~IEEE,}
        and~Ren\'e Vidal,~\IEEEmembership{Fellow,~IEEE}
\thanks{B. B\'ejar and R. Vidal are with the Mathematical Institute for Data Science and Department of Biomedical Engineering of The Johns Hopkins University. E-mail: \{bbejar, rvidal\}@jhu.edu.}
\thanks{I. Dokmani\'c is with the Department of Electrical and Computer Engineering of
the University of Illinois at Urbana Champaign. E-mail: dokmanic@illinois.edu}}

\markboth{Preprint}%
{B\'ejar \MakeLowercase{\textit{et al.}}: The fastest $\ell_{1,\infty}$ prox in the west}

\IEEEtitleabstractindextext{%
\begin{abstract}
Proximal operators are of particular interest in optimization problems dealing with non-smooth objectives because in many practical cases they lead to optimization algorithms whose updates can be computed in closed form or very efficiently. A well-known example is the proximal operator of the vector $\ell_1$ norm, which is given by the soft-thresholding operator. 
In this paper we study the proximal operator of the mixed $\ell_{1,\infty}$ matrix norm and show that it can be computed in closed form by applying the well-known soft-thresholding operator to each column of the matrix. However, unlike the vector $\ell_1$ norm case where the threshold is constant, in the mixed $\ell_{1,\infty}$ norm case each column of the matrix might require a different threshold and all thresholds depend on the given matrix. We propose a general iterative algorithm for computing these thresholds, as well as two efficient implementations that further exploit easy to compute lower bounds for the mixed norm of the optimal solution. Experiments on large-scale synthetic and real data indicate that the proposed methods can be orders of magnitude faster than state-of-the-art methods.

\end{abstract}

\begin{IEEEkeywords}
Proximal operator, mixed norm, block sparsity.
\end{IEEEkeywords}}

\maketitle


\section{Introduction}\label{sec:introduction}

\IEEEPARstart{R}{ecent} advances in machine learning and convex optimization techniques  have led to very efficient algorithms for solving a family of regularized estimation problems. Sparsity, as a strong regularization prior, plays
a central role in many inverse problems and the use of sparsity-promoting norms as regularizers
has become widespread over many different disciplines of science and engineering.
One added difficulty is the non-differentiability of such priors, which prevents the use of
classical optimization methods such as gradient descent or Gauss-Newton methods \cite{nocedal-wright06,boyd-vandenberghe04}. Proximal algorithms
present an efficient alternative to cope with non-smoothness of the objective function. Furthermore, in many practical situations, simple closed-form updates of the variables of interest are possible. For an excellent review about proximal operators and algorithms see \cite{combettes-pesquet11} and the monographs \cite{bach-etal12ftml,parikh-boyd14fto}.

\subsection{Motivation}
Let $\bm{X} = [\bm{x}_1,\ldots,\bm{x}_m]\in\mathbb{R}^{n\times m}$ be a real matrix with columns $\bm{x}_i\in\mathbb{R}^n$. The mixed $\ell_{p,q}$ norm of $\bm{X}$ is defined over its columns as
\begin{equation}
	\norm{\bm{X}}_{p,q} = \Big(\sum_{i=1}^m\norm{\bm{x}_i}_p^q\Big)^{1/q}.
\end{equation}
Mixed norms such as the $\ell_{p,1}$ matrix norm $(p\geq 2)$ have been used to promote block-sparse structure in the variables of interest, and the larger $p$ the stronger the correlation among the rows of $\bm{X}$ \cite{obozinski-etal10ss}. In particular, the $\ell_{\infty,1}$ norm has been shown to be useful for estimating a set of covariate regressors in problems such as multi-task learning \cite{Turlach:2005ex,quattoni-etal09icml,obozinski-etal10ss}, and representative (exemplar) selection \cite{Elhamifar:CVPR12}. A general formulation for these type of problems is to minimize some convex loss function subject to norm constraints:
\begin{equation}\label{prob:model_problem}\begin{array}{ll}
\minimize{\bm{X}}   	&J(\bm{X})\\
\st			&\norm{\bm{X}}_{\infty,1}\leq\tau,
\end{array}
\end{equation}
where $\tau>0$ controls the sparsity level and $J(\cdot)$ is some convex loss function. Note that keeping $\norm{\bm{X}}_{\infty,1}$ small encourages whole columns of $\bm{X}$ to be zero. In this contribution, we are interested in efficiently solving problems of the form of \eqref{prob:model_problem}. A simple method to solve problem (\ref{prob:model_problem}) is to use a projected (sub)gradient descent method that computes the $k$th iteration estimate $\bm{X}^{(k)}$ as
\begin{eqnarray}\label{eq:pgd}
\bm{Z}&\gets&\bm{X}^{(k-1)} -\eta_k\partial J\big(\bm{X}^{(k-1)}\big)\\
\bm{X}^{(k)}&\gets&\proj{\norm{\cdot}_{\infty,1}\leq\tau}\big(\bm{Z}\big),
\end{eqnarray}
where $\eta_k$ is the stepsize at the $k$th iteration of the algorithm, $\partial J(\bm{X})$ denotes a subgradient ({\em i.e.,} the gradient if differentiable) of $J$ at $\bm{X}$, and where $\proj{\norm{\cdot}_{\infty,1}\leq\tau}(\bm{Z})$ denotes the projection of $\bm{Z}$ onto the $\ell_{\infty,1}$ ball of radius $\tau$. The main computational challenge of such method is to project onto the $\ell_{\infty,1}$ mixed norm which can be computed by a proximal mapping of the dual norm---the mixed $\ell_{1,\infty}$ norm ({\em i.e.,} the induced $\ell_1$ norm of $\bm X$ seen as a linear operator). In this paper, we address the problem of projecting onto the mixed $\ell_{\infty,1}$ norm via the computation of the proximal operator of its dual norm. This allows us to solve the class of problems in \eqref{prob:model_problem} that involve structured/group sparsity, namely those involving constraints on (projections onto) the mixed $\ell_{\infty,1}$ norm. The proximal mapping of the mixed $\ell_{1,\infty}$ norm is also applicable to the computation of minimax sparse pseudoinverses to underdetermined systems of linear equations \cite{pseudo-1,dokmanic-gribonval19siam-jmaa}.

\subsection{Prior work}

Since the main computational challenge in solving problems of the form \eqref{prob:model_problem} is in the computation of the projection onto the $\ell_{\infty,1}$ ball of a certain radius, it is then of practical importance to devise computationally efficient algorithms for computing such projections. 
An efficient method for computing these projections is proposed in \cite{quattoni-etal09icml}. The algorithm is based on sorting the entries of
the $n$ by $m$ data matrix in order to find the values that satisfy the optimality conditions of the projection problem. The complexity of the method is dominated by the sorting operation and therefore has an average complexity of $O(mn\log(mn))$. An alternative strategy is to use root-search methods such as those in \cite{sra11mlkdd,chau-etal18icassp} in order to find the optimal solution. Here we take an alternative approach and look at the proximal operator of the mixed $\ell_{1,\infty}$ matrix norm. Since the mixed $\ell_{1,\infty}$ and $\ell_{\infty,1}$ norms are duals of each other, a simple relationship can be established between the proximal operator and the projection operator (see Section \ref{sec:prox_background}). However, by looking at the proximal operator a better insight and understanding of the problem can be gained and exploited to accelerate the algorithms.  Contrary to root-search methods our method is exact (up to machine precision), does not require any thresholds to determine convergence, and it is guaranteed to find the optimal solution in a finite number of iterations.

\subsection{Contributions}

In this paper we study the proximal operator of the mixed $\ell_{1,\infty}$ matrix norm and 
show that it can be computed using a generalization of the well-known soft-thresholding operator from the vector to the matrix case. 
The generalization involves applying the soft-thresholding operator to each column of the matrix using a possibly different threshold for each column. Interestingly, all thresholds are related to each other via a quantity that depends on the given matrix. This is in sharp contrast to the vector case, where the threshold is constant and is given by the regularization parameter. To compute the proximal operator efficiently, we propose a general iterative algorithm based on the optimality conditions of the proximal problem. Our method is further accelerated by the derivation of easy to compute lower bounds on the optimal value of the proximal problem that contribute to effectively reduce the search space.
A numerical comparison with the state of the art of two particular implementations of our general method reveals the improved computational efficiency of the proposed algorithms. We also illustrate the application of our results to biomarker discovery for the problem of cancer classification from gene expression data. The code used to generate the results presented in this paper is made publicly available by the authors.


\section{Norms, projections, and proximal operators}\label{sec:prox_background}
In this section we present some background material that highlights the relationship between proximal operators, norms, and orthogonal projection operators.

Consider a non-empty closed convex set $\mathcal{C}\subset\R^n$. The orthogonal projection of a point $\bm{x}\in\R^n$ onto $\mathcal{C}$ is given by
	\begin{equation}\label{eq:projection_constrained}
		\proj{\mathcal{C}}(\bm{x}) = \argmin_{\bm{y}\in\mathcal{C}}\, \frac{1}{2}\norm{\bm{x}-\bm{y}}_2^2,
	\end{equation}
	where we have included an irrelevant $1/2$ factor for convenience in the exposition. Alternatively, we can also express the projection of a point as an unconstrained optimization problem as
	\begin{equation}\label{eq:projection_unconstrained}
		\proj{\mathcal{C}}(\bm{x}) = \argmin_{\bm{y}}\, \ind{\bm{y}\in\mathcal{C}}+\frac{1}{2}\norm{\bm{x}-\bm{y}}_2^2,
	\end{equation}
where we have moved the constraint into the objective by making use of the indicator function of a non-empty subset $\mathcal X\subset \R^n$, which is given by
	\begin{equation}\label{eq:indicator_function}
		\ind{\bm{x}\in\mathcal{X}}=\left\{\begin{array}{rl}0, &\bm{x}\in\mathcal{X}\\+\infty, &\textrm{otherwise}\end{array}\right..
	\end{equation}
Keeping in mind the definition of the projection operator given in (\ref{eq:projection_unconstrained}) as an unconstrained optimization problem, we are now ready to introduce the definition of the proximal operator. Let $f(\bm{x}):\R^n\mapsto \R$ be a lower semicontinuous convex function. Then, for every $\bm{x}\in\R^n$ the proximal operator $\operatorname{prox}_f(x)$ is defined as
	\begin{equation}\label{eq:proximal_operator}
		\prox{f}(\bm{x}) = \argmin_{\bm{y}}\, f(\bm{y})+\frac{1}{2}\norm{\bm{x}-\bm{y}}_2^2.
	\end{equation}
It is then clear, that the proximal operator can be regarded as a generalization of the projection operator ({\em e.g.,} replace $f(\bm{y})$ by the indicator function of a set $\mathcal{C}$). Note that, at every point, the proximal operator is the {\em unique} solution of an unconstrained convex optimization problem. Uniqueness of the proximal operator can be easily argued from the fact that the quadratic term in (\ref{eq:proximal_operator}) makes the optimization cost strictly convex.

An important particular case that often appears in practice is that where the function $f$ is a norm. For example, problems of the form of \eqref{eq:proximal_operator} appear in many learning and signal processing problems, where the quadratic term can be seen as a data-fidelity term while the function $f$ can be thought of as imposing some prior on the solution ({\em e.g.,} sparsity). The special case where $f$ is a norm has also a close connection to projections via the Moreau decomposition theorem as we shall describe next. Let $f:\mathcal{X}\subseteq\mathbb{R}^n\mapsto\mathbb{R}$ be a lower semicontinuous convex function, then its Fenchel conjugate $f^*$ is defined as
\begin{equation}
f^*(\bm{y}) = \sup_{\bm{x}\in \mathcal{X}}\,\big\{\langle\bm{y},\bm{x}\rangle - f(\bm{x})\big\}.
\end{equation}
The Moreau decomposition theorem relates the proximal operators of a convex function and its Fenchel conjugate, as stated next.
	\begin{theorem}[\cite{moreau65bsmf}]
	Let $f$ be a lower-semicontinuous convex function and let $f^*$ denote its Fenchel (or convex) conjugate, then
	\begin{equation}\label{eq:moreau_decomposition}
		\prox{f}(\bm{x}) + \prox{f^*}(\bm{x}) = \bm{x}.
	\end{equation}		
	\end{theorem}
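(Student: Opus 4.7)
The plan is to match the first-order optimality conditions of the two proximal subproblems via Fenchel--Young reciprocity. First I set $\bm{p} := \prox{f}(\bm{x})$. Because the proximal objective $f(\bm{y}) + \tfrac{1}{2}\norm{\bm{x}-\bm{y}}_2^2$ is proper, lower-semicontinuous, and strictly convex (the quadratic term supplies strict convexity), $\bm{p}$ is the unique stationary point and is characterized by the subgradient inclusion
\begin{equation}
\bm{0} \in \partial f(\bm{p}) + (\bm{p} - \bm{x}), \qquad \text{i.e.,} \qquad \bm{x} - \bm{p} \in \partial f(\bm{p}).
\end{equation}

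Next I invoke the Fenchel reciprocity identity: for a proper lower-semicontinuous convex function $f$, $\bm{u} \in \partial f(\bm{v})$ if and only if $\bm{v} \in \partial f^*(\bm{u})$. Applied to $\bm{x} - \bm{p} \in \partial f(\bm{p})$, this yields $\bm{p} \in \partial f^*(\bm{x} - \bm{p})$, which rearranges to
\begin{equation}
\bm{0} \in \partial f^*(\bm{x} - \bm{p}) + \bigl((\bm{x} - \bm{p}) - \bm{x}\bigr).
\end{equation}
This is precisely the first-order optimality condition characterizing $\prox{f^*}(\bm{x})$ evaluated at the candidate point $\bm{q} := \bm{x} - \bm{p}$. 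By uniqueness of the proximal operator, $\prox{f^*}(\bm{x}) = \bm{x} - \bm{p}$, and adding $\bm{p} = \prox{f}(\bm{x})$ delivers the Moreau identity.

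The main obstacle is the Fenchel reciprocity identity itself; it is standard in convex analysis but nontrivial, resting on the biconjugate theorem $f^{**} = f$ for proper closed convex $f$ together with the equality case of the Fenchel--Young inequality. A more self-contained alternative would be to substitute $f(\bm{y}) = \sup_{\bm{u}} \{\langle \bm{u}, \bm{y} \rangle - f^*(\bm{u})\}$ into the proximal objective and swap $\inf_{\bm{y}}$ with $\sup_{\bm{u}}$ via strong duality; the inner minimization in $\bm{y}$ is then a quadratic whose minimizer is $\bm{y} = \bm{x} - \bm{u}$, and the outer problem reduces verbatim to the definition of $\prox{f^*}(\bm{x})$. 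Either route terminates in the same displacement relation and yields the theorem.
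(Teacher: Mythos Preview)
Your argument is correct and is in fact the standard modern proof of Moreau's decomposition: characterize $\bm p=\prox{f}(\bm x)$ by the subgradient inclusion $\bm x-\bm p\in\partial f(\bm p)$, flip it with the Fenchel reciprocity $\bm u\in\partial f(\bm v)\Leftrightarrow\bm v\in\partial f^*(\bm u)$ (valid for proper, lsc, convex $f$), and read off the optimality condition for $\prox{f^*}(\bm x)$ at $\bm x-\bm p$. The alternative duality-swap route you sketch is also fine.

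There is no proof in the paper to compare against: the authors state the theorem as a classical result and simply cite Moreau's original 1965 paper \cite{moreau65bsmf} without reproducing any argument. So your proposal supplies strictly more than the paper does on this point. One minor remark: the paper's hypothesis says only ``lower-semicontinuous convex,'' but your use of Fenchel reciprocity (equivalently, $f^{**}=f$) and the well-definedness of both proximal operators implicitly require $f$ to be proper as well; you correctly insert that assumption, and it is the right move.
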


For the special case where $f(\bm{x}) = \norm{\bm{x}}$ is a norm, it is well known that its Fenchel conjugate $f^*$ is given by
	\begin{equation}
		f^*(\bm{x}) = \ind{\norm{\bm{x}}_\ast\leq 1} =  \left\{\begin{array}{rl}
											0, &\norm{\bm{x}}_\ast \leq 1\\
											+\infty, &\textrm{otherwise}
											\end{array}\right.,
	\end{equation}
where $\norm{\cdot}_*$ is the dual norm of $\norm{\cdot}$ ({\em i.e.,} $\norm{\bm{z}}_* = \sup_{\bm{x}}\, \{\langle\bm{z},\bm{x}\rangle\, :\, \norm{\bm{x}}\leq 1\}$). That is, the Fenchel conjugate of a norm is the indicator function of the unit-norm ball of its dual norm (see for instance \cite{boyd-vandenberghe04} for a proof). Since the proximal operator of the indicator function of a set equals the orthogonal projection onto the set, it follows from \eqref{eq:moreau_decomposition} that
	\begin{equation}\label{eq:proximal-projection}
		\prox{\lambda\norm{\cdot}} = I - \proj{\norm{\cdot}_\ast\leq \lambda},
	\end{equation}
where $\proj{\norm{\cdot}_\ast\leq\lambda}$ denotes the projection onto the ball of radius $\lambda$ of the dual norm, and where $I$ is the identity operator.

Let $\bm{X}\in \mathbb{R}^{n\times m}$ then its mixed $\ell_{1,\infty}$ (induced $\ell_1$) norm is given by
\begin{equation}\label{eq:induced_norm}
	\lVert\bm{X}\rVert_{1,\infty} = \max_{\lVert\bm{u}\rVert_1 = 1} \lVert \bm{Xu}\rVert_1 = \max_i\, \lVert\bm{x}_i\rVert_1,
\end{equation}
where $\bm{x}_i$ corresponds to the $i$th column of matrix $\bm{X}$.
For the case of the induced $\ell_\infty$ operator norm we have the well-known relationship
\begin{equation}\label{eq:l1_linf}
	\lVert \bm{X} \rVert_\infty = \max_{\lVert\bm{u}\rVert_\infty = 1} \lVert \bm{Xu}\rVert_\infty = \lVert\bm{X}^\T\rVert_{1,\infty}.
\end{equation}
Also, recall the duality relationship between the $\ell_{\infty,1}$ norm and the mixed $\ell_{1,\infty}$ norm:
\begin{equation}\label{eq:l1inf}
	\norm{\bm X}_{\infty,1} = \sum_{i=1}^m \norm{\bm x_i}_\infty = \big(\norm{\bm X}_{1,\infty}\big)_\ast .
\end{equation}
Thus, without loss of generality, we will focus our analysis on the derivation of the proximal operator for the mixed $\ell_{1,\infty}$ norm, and derive expressions for the proximal operators of the induced $\ell_\infty$ and the projection operator onto the $\ell_{\infty,1}$ norm using the above relationships.


\section{Analysis of the mixed $\ell_{1,\infty}$ norm proximal operator}
\label{sec:prox_operator}
The relationship given in \eqref{eq:proximal-projection} makes it clear that finding the proximal operator of a norm amounts to knowing how to project onto the unit-norm ball of the dual norm and vice-versa.  In \cite{quattoni-etal09icml} the authors derived the optimality conditions for the projection onto the $\ell_{\infty,1}$ norm (see \eqref{eq:l1inf}) and proposed an algorithm for its computation based on sorting the entries of the matrix. Since these norms are duals of each other, the proximal operator for such norms can be readily computed based on \eqref{eq:proximal-projection}. In contrast, we look at the proximal operator itself and derive the optimality conditions. By doing so, we arrive at a more compact expression for the optimality conditions that generalizes the well-known soft-thresholding algorithm to the matrix case. Our analysis allows for a more intuitive interpretation of the proximal operator as well as the derivation of novel algorithms for its computation.

Given a matrix $\bm{V}\in\mathbb{R}^{n\times m}$, the proximal operator of the mixed $\ell_{1,\infty}$ norm with parameter $\lambda > 0$ is the solution to the following convex optimization problem:
\begin{equation}\label{eq:prox_operator}
	\prox{\lambda\norm{\cdot}_{1,\infty}}(\bm{V}) = \underset{\bm{X}}{\argmin}\, \displaystyle\norm{\bm{X}}_{1,\infty} + \frac{1}{2\lambda}\norm{\bm{X}-\bm{V}}_F^2.
\end{equation}
Using the definition of the mixed norm in \eqref{eq:induced_norm}, we can rewrite problem \eqref{eq:prox_operator} as the following
constrained optimization problem:
\begin{align}
\label{prob:prox}
\begin{split}
	\minimize{\bm{X},\,t} 	\quad &t + \frac{1}{2\lambda}\norm{\bm{X}-\bm{V}}_F^2\\
	\st				\quad &\norm{\bm{x}_i}_1 \leq t, \quad i=1,\dots, m.
\end{split}
\end{align}
By looking at the structure of problem \eqref{prob:prox} it is easy to derive the following result:
\begin{lemma}[Matched Sign]\label{lemma:matched_sign}
	The sign of the optimal solution $\bm{X}^\star$ of \eqref{prob:prox} must match the sign of $\bm{V}$, that is
	\begin{equation}
		\sign(\bm{X}^\star)=\sign(\bm{V})\,,
	\end{equation}
	where the $\sign(\cdot)$ function operates element-wise.
\end{lemma}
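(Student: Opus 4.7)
The plan is to argue by contradiction using the fact that the $\ell_{1,\infty}$ norm depends on its argument only through absolute values, while the Frobenius data-fit term is strictly minimized when signs align. Concretely, suppose for contradiction that the optimizer $\bm{X}^\star$ has an entry $(i,j)$ with $X^\star_{ij} V_{ij} < 0$. I would construct a competitor $\bm{X}'$ that agrees with $\bm{X}^\star$ everywhere except at $(i,j)$, where I set $X'_{ij} = -X^\star_{ij}$ (equivalently, $X'_{ij} = \sign(V_{ij})\,|X^\star_{ij}|$).

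The first step is to observe that $\norm{\bm{X}'}_{1,\infty}=\norm{\bm{X}^\star}_{1,\infty}$. This follows because, by \eqref{eq:induced_norm}, the mixed $\ell_{1,\infty}$ norm equals $\max_k \norm{\bm{x}_k}_1 = \max_k \sum_l |X_{lk}|$, a function only of the entrywise absolute values, and $|X'_{ij}|=|X^\star_{ij}|$ by construction.

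The second step is to show the quadratic term strictly decreases. Only the $(i,j)$ contribution to $\norm{\bm{X}-\bm{V}}_F^2$ changes, and
\begin{equation*}
(X'_{ij}-V_{ij})^2 - (X^\star_{ij}-V_{ij})^2 = -4\,X^\star_{ij}V_{ij} < 0,
\end{equation*}
using our assumption $X^\star_{ij}V_{ij}<0$. Hence $\bm{X}'$ achieves a strictly smaller objective than $\bm{X}^\star$, contradicting optimality. Since the objective in \eqref{eq:prox_operator} is strictly convex in $\bm{X}$ (by the quadratic term), the minimizer is unique, so this contradiction is genuine and rules out any mismatched nonzero entry. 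The remaining boundary case $V_{ij}=0$ with $X^\star_{ij}\ne 0$ is handled analogously by setting $X'_{ij}=0$: this weakly decreases the norm term (since a column's $\ell_1$ norm does not grow) and strictly decreases the quadratic term, giving $X^\star_{ij}=0$ whenever $V_{ij}=0$.

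I do not anticipate a genuine obstacle here; the only point requiring care is the convention for $\sign(0)$, which I would address by stating the conclusion in the equivalent form $X^\star_{ij}V_{ij}\ge 0$ for all $(i,j)$, so the lemma is interpreted as saying the signs agree whenever they are defined.
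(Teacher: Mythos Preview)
Your proposal is correct and follows essentially the same contradiction argument as the paper: construct a competitor by flipping signs so that entries align with those of $\bm{V}$, observe that the $\ell_{1,\infty}$ norm is unchanged while the Frobenius term strictly decreases, and conclude. The only cosmetic differences are that the paper flips all mismatched entries at once rather than one at a time, and it does not explicitly treat the $V_{ij}=0$ boundary case or the $\sign(0)$ convention that you handle; your extra care there is a modest improvement in rigor but not a different method.
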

\begin{proof}
	The proof follows by contradiction. Assume $(\bm{X}^\star,t^\star)$ is the optimal solution to problem \eqref{prob:prox} and that there are some nonzero entries of $\bm{X}^\star$ that have the opposite sign to the corresponding entries in $\bm{V}$, \emph{i.e.,} $\sign(x_{ij}^\star) = -\sign(v_{ij})$ for some $ij$. Now, form the matrix $\bm{\tilde X} $ such that $\tilde x_{ij} = \sign(v_{ij}) |x_{ij}^\star|$. The point $(\bm{\tilde X},t^\star)$ is feasible and causes a reduction in the objective function since $\norm{\bm{\tilde X}-\bm{V}}_F < \norm{\bm{X}^\star-\bm{V}}_F$ while keeping the norm unchanged $\norm{\bm{X}^\star}_{1,\infty}=t^\star  =  \norm{\bm{\tilde X}}_{1,\infty}$. This contradicts the assumption that $\bm{X}^\star$ is the optimal solution. 
\end{proof}

Based on Lemma \ref{lemma:matched_sign} the problem of finding the proximal operator in \eqref{prob:prox} boils down to finding the magnitudes of the entries of the matrix $\bm{X}$. Therefore, we can formulate it as\footnote{Notice that this is a power allocation problem which belongs to the general family of waterfilling problems \cite{palomar-fonollosa05tsp}.}
\begin{align}
\label{prob:prox_nn}
\begin{split}
	\minimize{\bm{X},\,t} 	\quad & t + \frac{1}{2\lambda}\norm{\bm{X}-\bm{U}}_F^2\\
	\st				\quad &\1^\T\bm{x}_i \leq t, \quad \bm x_{i} \geq \bm 0, \quad i=1,\dots, m,
\end{split}
\end{align}
where $\bm{U} = [\bm{u}_1,\ldots,\bm{u}_m]\in\R^{n\times m}_+$ is a matrix with non-negative entries given by $u_{ij} = |v_{ij}|$. The following result determines the optimal solution of problem \eqref{prob:prox_nn} and, as a consequence, it also determines the proximal operator of the mixed $\ell_{1,\infty}$ norm:

\begin{proposition}\label{prop:optimality}
	The optimal solution $(\X^\star,t^\star)$ of problem \eqref{prob:prox_nn} is given by
	\begin{equation}\label{eq:Xstar}
		\X^\star = \Big[\bm{U}-\lambda\1\boldsymbol{\mu}^\T\Big]_+,
	\end{equation}
	and
	\begin{equation}\label{eq:tstar}
		t^\star = \frac{\sum_{i\in\sM^\star}\frac{1}{|\sJ_i^\star|}\sum_{j\in\sJ_i^\star} u_{ij} - \lambda}{\sum_{i\in\sM^\star}\frac{1}{|\sJ_i^\star|}},
	\end{equation}
	where $[\cdot]_+ = \max(\cdot,0)$, $\sM^\star = \{1\leq i\leq m\,:\,\1^\T\bm{u}_i\geq t^\star\}$ is the set of columns affected by thresholding, $\sJ_i^\star=\{1\leq j\leq n\,:\,u_{ij}-\lambda\mu_i^\star \geq 0\}$ is the set of indices of the non-zero entries of $\bm{x}_i^\star$, and 
	\begin{equation}\label{eq:mustar}
		\mu_i^\star = \Big[\frac{\sum_{j\in\sJ_i^\star} u_{ij} - t^\star}{\lambda|\sJ_i^\star|}\Big]_+,\ i=1,\dots,m.
	\end{equation}
	is the $i$th entry of the vector $\bm{\mu}\in\R^m$.
\end{proposition}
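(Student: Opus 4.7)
The plan is to verify the characterization by writing out the KKT conditions of \eqref{prob:prox_nn}. The objective is strictly convex in $\bm{X}$ and the constraints are linear, so Slater's condition holds trivially (take any $\bm X=\bm 0$ with a large $t$), hence the KKT conditions are both necessary and sufficient for optimality.

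First I would introduce multipliers $\mu_i\ge 0$ for the $m$ inequality constraints $\1^\T\bm{x}_i \le t$ and $\gamma_{ij}\ge 0$ for the non-negativity constraints $x_{ij}\ge 0$, and write the Lagrangian. Stationarity in $x_{ij}$ then gives $x_{ij} = u_{ij} - \lambda\mu_i + \lambda\gamma_{ij}$, while stationarity in $t$ yields the normalization $\sum_i \mu_i = 1$. Combining the stationarity condition with the complementary slackness condition $\gamma_{ij}x_{ij}=0$, I would argue (case $x_{ij}>0$ forces $\gamma_{ij}=0$; case $x_{ij}=0$ forces $u_{ij}\le \lambda\mu_i$) that
\begin{equation*}
x_{ij}^\star = [u_{ij} - \lambda\mu_i^\star]_+ ,
\end{equation*}
which is exactly \eqref{eq:Xstar}. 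In particular $\sJ_i^\star=\{j:u_{ij}\ge \lambda\mu_i^\star\}$ indexes the non-zero (and boundary) entries of $\bm x_i^\star$.

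Next I would unpack the complementary slackness condition $\mu_i^\star(\1^\T\bm x_i^\star - t^\star)=0$. When $\mu_i^\star>0$, the constraint $\1^\T\bm x_i^\star = t^\star$ becomes active, so expanding the sum $\sum_{j\in\sJ_i^\star}(u_{ij}-\lambda\mu_i^\star)=t^\star$ and solving for $\mu_i^\star$ produces $(\sum_{j\in\sJ_i^\star}u_{ij}-t^\star)/(\lambda|\sJ_i^\star|)$. When $\mu_i^\star=0$, no thresholding happens in column $i$ and we must have $\1^\T\bm u_i\le t^\star$; in this case the displayed bracketed expression is automatically non-positive, so formula \eqref{eq:mustar} with $[\cdot]_+$ is consistent. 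I would then identify $\sM^\star$ as the set of columns contributing a positive $\mu_i^\star$ and verify that this coincides with $\{i:\1^\T \bm u_i \ge t^\star\}$ (with ties absorbed harmlessly into either set because $\mu_i^\star=0$ on the boundary).

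Finally, summing the $\mu_i^\star$ over $i\in\sM^\star$ and using the stationarity identity $\sum_i\mu_i^\star=1$, I would solve the resulting linear equation in $t^\star$:
\begin{equation*}
\sum_{i\in\sM^\star}\frac{\sum_{j\in\sJ_i^\star}u_{ij}-t^\star}{\lambda|\sJ_i^\star|}=1,
\end{equation*}
yielding the expression \eqref{eq:tstar}. The main obstacle I expect is the circular definition: $\sJ_i^\star$ and $\sM^\star$ are themselves defined through the optimal $\mu_i^\star$ and $t^\star$, so I would be careful to present the proof as a verification that the triple $(\bm X^\star,t^\star,\bm\mu^\star)$, once self-consistently determined by the fixed-point relations above, satisfies every KKT condition. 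Uniqueness of $\bm X^\star$ follows from strict convexity in $\bm X$, and uniqueness of $t^\star$ follows because $t^\star=\max_i\1^\T\bm x_i^\star=\|\bm X^\star\|_{1,\infty}$ at optimum.
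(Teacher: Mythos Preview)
Your proposal is correct and follows essentially the same route as the paper: form the Lagrangian with multipliers for the norm constraints and the non-negativity constraints, read off from stationarity that $x_{ij}^\star=[u_{ij}-\lambda\mu_i^\star]_+$ and $\sum_i\mu_i^\star=1$, use complementary slackness on $\mu_i$ to obtain \eqref{eq:mustar}, and then sum over the active columns to solve for $t^\star$. Your explicit remarks on Slater's condition, the self-referential nature of $\sM^\star,\sJ_i^\star$, and the uniqueness argument are good additions that the paper leaves implicit.
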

\begin{proof}
The Lagrangian of problem \eqref{prob:prox_nn} is given by
\begin{equation}\begin{split}
	\sL\big(\bm{X},t,\bm{\mu},\{\bm{\sigma}_i\}_{i=1}^m\big) = t &+ \frac{1}{2\lambda}\sum_{i=1}^m \|\bm{x}_i-\bm{u}_i\|^2\\ 
	&+\sum_{i=1}^m \mu_i(\1^\T\bm{x}_i - t) - \sum_{i=1}^m \bm{\sigma}_i^\T \bm{x}_{i}\ .
\end{split}\end{equation}
Since the problem is convex, the necessary and sufficient conditions for optimality are given by the KKT conditions:
\begin{itemize}

	\item Zero gradient of the Lagrangian
		\begin{align}
			\pderivative{\sL}{\bm{x}_k} &= \frac{1}{\lambda} (\bm{x}_k-\bm{u}_k) + \mu_k\1 - \bm{\sigma}_k = \bm{0},\ \forall k\label{eq:dLdx}\\
			\pderivative{\sL}{t} &= 1 - \sum_{i=1}^m\mu_i = 0\label{eq:dLdt}
		\end{align}
		
	\item Primal and dual feasibility
		\begin{align}
			\1^\T\bm{x}_k \leq t, \quad\bm{x}_{k} \geq \bm{0}, \quad &k=1,\dots, m\label{eq:pfeas}\\
			\bm{\mu} \geq \bm{0}, \quad \bm{\sigma}_k \geq \bm{0}, \quad &k=1,\dots, m\label{eq:dfeas}
		\end{align}
				
	\item Complementary slackness
		\begin{align}
			\mu_k(\1^\T\bm{x}_k - t)=0, \quad &k=1,\dots, m\label{eq:slackmu}\\
			\bm{\sigma}_k\odot \bm{x}_{k}=0, \quad &k=1,\dots, m,\label{eq:slacksigma}
		\end{align}
		where $\odot$ denotes element-wise product.
\end{itemize}
	We start by showing that equation \eqref{eq:Xstar} holds or equivalently, that every column $\bm{x}_k$ of $\bm{X}$ satisfies
	\begin{equation}\label{eq:xstar}
		\bm{x}_k = \big[\bm{u}_k - \lambda\mu_k\1\big]_+,\ k=1,\ldots,m.
	\end{equation}
	In order to do so, let $\sM = \{1\leq i\leq m\,:\,\1^\T\bm{u}_i\geq t\}$ be the set of columns that are affected by thresholding. Take for instance $\bm{x}_k$ for some $k\in\sM$, then we have
	\begin{equation}
		\bm{x}_k = \bm{u}_k - \lambda\mu_k\1 +\lambda\bm{\sigma}_k.
	\end{equation}
	In this case we can have $x_{kj} > 0$ which, by \eqref{eq:slacksigma}, \eqref{eq:pfeas}, \eqref{eq:dfeas} implies $\sigma_{kj} = 0$. Alternatively, we can have $x_{kj} = 0$ which means $u_{kj}-\lambda\mu_k < 0$. Therefore, both situations can be written in compact form as
	\begin{equation}\label{eq:xM}
		\bm{x}_k = \big[\bm{u}_k - \lambda\mu_k\1\big]_+,\ k\in\sM,
	\end{equation}
	where the thresholding operation $[\cdot]_+$ is applied element-wise. Alternatively, take $\bm{x}_k$ for some $k\notin\sM$ then from \eqref{eq:slackmu} it follows that $\mu_k = 0$ and hence, $\bm{x}_k = \bm{u}_k+\lambda\bm{\sigma}_k$. From \eqref{eq:slacksigma} and the fact that $\bm{u}_k \geq \bm 0$ it follows that $\bm{\sigma}_k = \bm{0}$ for all $k\notin\sM$. Therefore, we have that
	\begin{equation}\label{eq:xnotM}
		\bm{x}_k = \bm{u}_k,\ k\notin\sM.
	\end{equation}
	Since $\mu_k=0$ for $k\notin\sM$ we can put together \eqref{eq:xM} and \eqref{eq:xnotM} into a single expression as in \eqref{eq:xstar}.
	It remains now to derive an expression that relates $t$ and $\{\mu_k\}_{k=1}^m$. We know from \eqref{eq:slackmu} and the fact that $\mu_k\neq 0$ for $k\in\sM$ that
	\begin{equation}\label{eq:tmu}
		\1^\T\bm{x}_k = \sum_{j=1}^n \big[ u_{kj} - \lambda\mu_k \big]_+ = \sum_{j\in\sJ_k} (u_{kj} - \lambda\mu_k)  = t,\, k\in\sM,
	\end{equation}
	where we the set $\sJ_k$ denotes the non-zero entries of $\bm{x}_k$. Solving for $\mu_k$ in (\ref{eq:tmu}) leads to
	\begin{equation}
		\mu_k = \frac{\sum_{j\in\sJ_k} u_{kj} - t}{\lambda|\sJ_k|},\, k\in\sM.
	\end{equation}
	Recall that for $k\notin\sM$ we have $\mu_k=0$ and $\1^\T\bm{u}_k < t$ therefore, we can compactly express $\mu_k$ as
	$$ \mu_k = \Big[\frac{\sum_{j\in\sJ_k} u_{kj} - t^\star}{\lambda|\sJ_k|}\Big]_+,\ k=1,\dots,m,$$
	and we recover the expression in (\ref{eq:mustar}). Finally, using equation (\ref{eq:dLdt}) it is easy to check that
	$$t^\star = \frac{\sum_{k\in\sM}\frac{1}{|\sJ_k|}\sum_{j\in\sJ_k} u_{kj} - \lambda}{\sum_{k\in\sM}\frac{1}{|\sJ_k|}},$$
	which completes the proof.
\end{proof}
We are now ready to derive an expression for the proximal operator of the mixed $\ell_{1,\infty}$ norm as:
\begin{corollary}[Proximal Operator]\label{eq:prox_solution}
	The proximal operator in (\ref{eq:prox_operator}) is given by
	\begin{equation}\label{eq:prox_expression}
		\prox{\lambda\norm{\cdot}_{1,\infty}}(\bm{V}) = \sign(\bm{V})\odot \Big[|\bm{V}|-\lambda\1\boldsymbol{\mu}^\T\Big]_+,
	\end{equation}
	where $\boldsymbol{\mu}$ is given as in Proposition \ref{prop:optimality}.
\end{corollary}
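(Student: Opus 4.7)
The plan is to assemble the corollary directly from the two results that immediately precede it. Lemma 1 says that the optimal $\X^\star$ has the same sign pattern as $\bm{V}$ entrywise, and Proposition 1 computes the optimum of the problem stated in terms of the nonnegative matrix $\bm{U}$ with $u_{ij}=|v_{ij}|$. So the work is essentially to argue that ``strip the signs, solve, put the signs back'' is legitimate.

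First I would note that problem \eqref{prob:prox} and problem \eqref{prob:prox_nn} are linked by the change of variable $x_{ij} = \sign(v_{ij})\,y_{ij}$ with $y_{ij}\geq 0$: by Lemma 1 the optimizer of \eqref{prob:prox} satisfies $\sign(x_{ij}^\star)=\sign(v_{ij})$ whenever $x_{ij}^\star\neq 0$, and the zero entries are sign-agnostic, so without loss we may fix signs to match $\bm{V}$. Under this substitution the data-fidelity term becomes $\tfrac{1}{2\lambda}\sum_{ij}(y_{ij}-|v_{ij}|)^2 = \tfrac{1}{2\lambda}\|\bm{Y}-\bm{U}\|_F^2$, and the constraint $\|\bm{x}_i\|_1\leq t$ becomes $\1^\T\bm{y}_i\leq t$ with $\bm{y}_i\geq\bm{0}$. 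Hence \eqref{prob:prox_nn} is exactly \eqref{prob:prox} rewritten in the magnitudes.

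Next I would invoke Proposition 1 on the rewritten problem to conclude $\bm{Y}^\star = \big[\bm{U}-\lambda\1\boldsymbol{\mu}^\T\big]_+$ with $\boldsymbol{\mu}$ as defined there. Since $\bm{U}=|\bm{V}|$ and $\X^\star = \sign(\bm{V})\odot \bm{Y}^\star$, I obtain
\begin{equation*}
\X^\star = \sign(\bm{V})\odot \big[|\bm{V}|-\lambda\1\boldsymbol{\mu}^\T\big]_+,
\end{equation*}
which is precisely the claimed formula for $\prox{\lambda\norm{\cdot}_{1,\infty}}(\bm{V})$.

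There is no real obstacle here, since Lemma 1 and Proposition 1 do all of the heavy lifting; the corollary is a packaging statement. The only small point to be careful about is that the sign substitution is well-defined even when $v_{ij}=0$ (in which case $u_{ij}=0$ forces $y_{ij}^\star=0$, so any choice of sign yields the same $x_{ij}^\star=0$), which I would mention in one line to keep the argument airtight.
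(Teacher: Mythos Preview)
Your argument is correct and matches the paper's own proof, which simply states that the result follows directly from Lemma~\ref{lemma:matched_sign} and Proposition~\ref{prop:optimality}; you have merely unpacked the sign-stripping substitution that the paper leaves implicit. The extra care about the $v_{ij}=0$ case is a nice touch but not strictly needed.
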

\begin{proof}
	It follows directly from Lemma \ref{lemma:matched_sign} and Proposition \ref{prop:optimality}.
\end{proof}

The expression in Corollary \ref{eq:prox_solution} resembles very much the well-known soft-thresholding operator. In fact, the proximal operator of the mixed $\ell_{1,\infty}$ norm applies a soft-thresholding operation to every column of the matrix but with a different threshold value $\lambda\mu_i$ for each column $i=1,\ldots,m$ (see Fig. \ref{fig:original_vs_thresholded}). As expected, the above expression reduces to soft-thresholding for $m=1$:

\begin{figure*}
	\centering
		\includegraphics[width=\linewidth]{./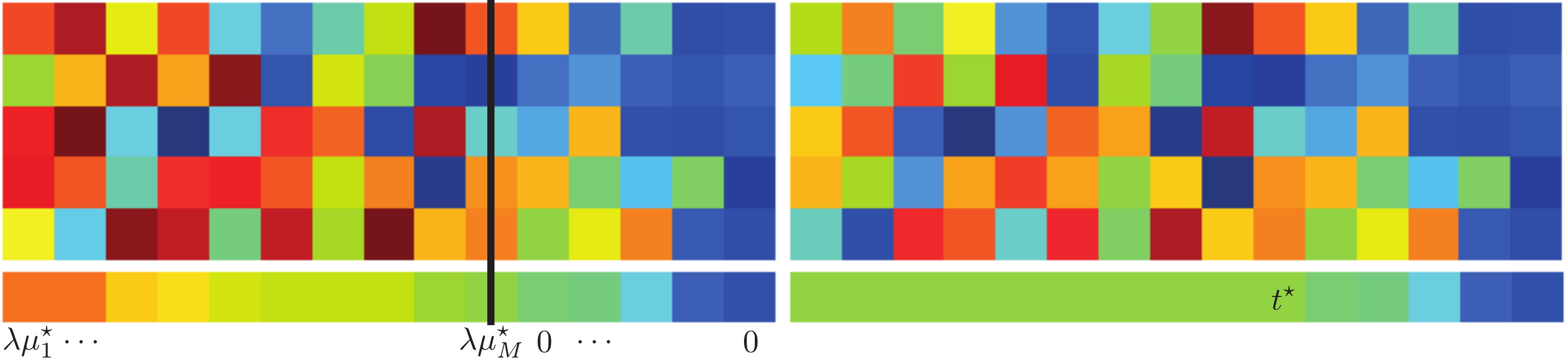}
	\caption{Illustration of the effect of the proximal operator. The left plot corresponds to the original matrix while the right plot corresponds to its thresholded counterpart. The bottom color plots represent the $\ell_1$ norm of each column. Warmer colors mean larger entries. The proximal operator projects the columns of the input matrix onto the $\ell_1$ ball of radius $t^\star$.}
	\label{fig:original_vs_thresholded}
\end{figure*}

\begin{corollary}[Soft-thresholding]
	In the case $m=1$ so that $\bm{V}=\bm{v}\in\mathbb{R}^n$ is a vector, the proximal operator is given by the well-known soft-thresholding
	\begin{equation}
		\prox{\lambda\norm{\cdot}_1}(\bm{v}) = \sign(\bm{v})\odot \Big[|\bm{v}|-\lambda\1\Big]_+.
	\end{equation}
\end{corollary}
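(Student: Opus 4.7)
The plan is to specialize the general proximal operator formula of Corollary 1 to the case $m=1$ and verify that the multiplier vector $\boldsymbol{\mu}$ collapses to the scalar value $1$, which then reduces the column-dependent threshold $\lambda\mu_i$ to the uniform threshold $\lambda$.

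First, I would note that when $m=1$ the definition of the mixed norm in \eqref{eq:induced_norm} gives $\norm{\bm{v}}_{1,\infty} = \max_{i=1}\norm{\bm{x}_i}_1 = \norm{\bm{v}}_1$, so the left-hand side of the claim is indeed the proximal operator of $\lambda\norm{\cdot}_1$ on $\R^n$. Next, I would invoke Corollary 1, which states
\begin{equation*}
\prox{\lambda\norm{\cdot}_{1,\infty}}(\bm{V}) = \sign(\bm{V})\odot\bigl[|\bm{V}|-\lambda\1\boldsymbol{\mu}^\T\bigr]_+,
\end{equation*}
with $\boldsymbol{\mu}\in\R^m$ as in Proposition 1. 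When $m=1$, $\boldsymbol{\mu}$ reduces to a single scalar $\mu_1$.

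The key step is then to determine $\mu_1$. I would appeal directly to the stationarity condition \eqref{eq:dLdt} derived in the proof of Proposition 1, namely $\sum_{i=1}^m\mu_i = 1$, which for $m=1$ forces $\mu_1=1$. (This is consistent with the closed-form \eqref{eq:mustar} in the nontrivial case $\mathbf{1}^\T\bm{u}_1\geq t^\star$, where one can check that $\sJ_1^\star$ and $t^\star$ satisfy the identity; the degenerate case $\bm{u}_1=\bm{0}$ is handled equally well by the KKT identity since the formula then produces $\bm{0}$.)

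Substituting $\mu_1=1$ into the displayed expression gives $\lambda\1\boldsymbol{\mu}^\T = \lambda\1$, and therefore
\begin{equation*}
\prox{\lambda\norm{\cdot}_1}(\bm{v}) = \sign(\bm{v})\odot\bigl[|\bm{v}|-\lambda\1\bigr]_+,
\end{equation*}
which is the standard soft-thresholding operator. There is no real obstacle: the entire argument is a specialization of the previously established KKT identity $\sum_i\mu_i=1$, so the corollary is essentially a sanity check that the general formula recovers the classical scalar case.
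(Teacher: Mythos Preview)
Your proof is correct and follows the same overall scheme as the paper: specialize Corollary~1 to $m=1$, establish that the single multiplier equals $1$, and read off the soft-thresholding formula. The only difference is in how you obtain $\mu_1=1$. The paper first computes $t^\star$ from \eqref{eq:tstar} (which for $m=1$ and $\sM^\star=\{1\}$ gives $t^\star=\sum_{j\in\sJ}|v_j|-|\sJ|\lambda$) and then substitutes this into \eqref{eq:mustar} to recover $\mu^\star=1$. You instead invoke the stationarity condition \eqref{eq:dLdt}, $\sum_{i=1}^m\mu_i=1$, which for $m=1$ immediately forces $\mu_1=1$. Your route is shorter and avoids manipulating the $t^\star$ formula altogether; the paper's route has the minor advantage of exercising both closed-form expressions in Proposition~1 as a consistency check. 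Either way the argument is a one-line specialization.
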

\begin{proof}
By setting $m=1$ we get from Proposition \ref{prop:optimality} that $t = \sum_{j\in\sJ}|v_j| - |\sJ|\lambda$, where $\sJ$ is the set of non-zero entries of the optimal vector $\bm{x}^\star$. Substituting this value into (\ref{eq:mustar}) we get that $\mu^\star = 1$. The result then follows from Corollary~\ref{eq:prox_solution}.
\end{proof}

\begin{corollary}[Projection onto the $\ell_{\infty,1}$ ball]
	The projection onto the $\ell_{\infty,1}$ ball of radius $\lambda$ is
	\begin{equation}\label{eq:projection_l1oo}
		\proj{\norm{\cdot}_{\infty,1}\leq \lambda}(\bm{V}) = \sign(\bm{V})\odot \min\left(|\bm{V}|,\lambda\1\bm{\mu}^\T\right),
	\end{equation}
	with $\bm{\mu}$ given as in Proposition \ref{prop:optimality}.
\end{corollary}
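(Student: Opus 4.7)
The plan is to invoke the Moreau-style identity between the proximal operator of a norm and the projection onto the ball of the dual norm, together with the closed-form expression for the proximal operator already established in Corollary~\ref{eq:prox_solution}. Recall that $\|\cdot\|_{1,\infty}$ and $\|\cdot\|_{\infty,1}$ are dual to each other (equation \eqref{eq:l1inf}). Therefore, by equation \eqref{eq:proximal-projection} applied with $\|\cdot\| = \|\cdot\|_{1,\infty}$, we have
\begin{equation*}
\proj{\norm{\cdot}_{\infty,1}\leq \lambda}(\bm{V}) \;=\; \bm{V} - \prox{\lambda\norm{\cdot}_{1,\infty}}(\bm{V}).
\end{equation*}

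Substituting the expression from Corollary~\ref{eq:prox_solution} and using $\bm{V} = \sign(\bm{V}) \odot |\bm{V}|$, the right-hand side becomes
\begin{equation*}
\sign(\bm{V}) \odot \Big( |\bm{V}| - \bigl[|\bm{V}| - \lambda\1\bm{\mu}^\T\bigr]_+ \Big).
\end{equation*}
The final step is the elementwise identity $a - [a-b]_+ = \min(a,b)$ for any $a \geq 0$ and $b \geq 0$: if $a \geq b$ then $[a-b]_+ = a-b$ and the expression equals $b$; if $a < b$ then $[a-b]_+ = 0$ and the expression equals $a$. This identity applies entrywise because $|\bm{V}|$ is nonnegative by definition and $\lambda\1\bm{\mu}^\T$ is nonnegative thanks to the dual feasibility condition \eqref{eq:dfeas}, which gives $\bm{\mu} \geq \bm{0}$.

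There is essentially no obstacle here: the only ingredients beyond the already-proved Corollary~\ref{eq:prox_solution} are (i) the Moreau decomposition specialized to norms, which was stated in Section~\ref{sec:prox_background}, and (ii) the elementary scalar identity $a - [a-b]_+ = \min(a,b)$ applied componentwise. The one subtlety worth flagging is the nonnegativity of $\bm{\mu}$, which must be invoked to justify the componentwise application of that identity; this is exactly the content of \eqref{eq:dfeas} in the proof of Proposition~\ref{prop:optimality}. Combining these observations yields the claimed expression \eqref{eq:projection_l1oo}.
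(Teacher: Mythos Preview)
Your proof is correct and follows exactly the approach the paper takes: the paper's own proof simply cites Corollary~\ref{eq:prox_solution}, equation~\eqref{eq:proximal-projection}, and equation~\eqref{eq:l1inf}, and you have merely spelled out the elementwise algebra (the identity $a-[a-b]_+=\min(a,b)$ and the nonnegativity of $\bm{\mu}$) that the paper leaves implicit.
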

\begin{proof}
The result follows from Corollary \ref{eq:prox_solution}, \eqref{eq:proximal-projection} and \eqref{eq:l1inf}.
\end{proof}

\paragraph*{Remark} Note that the results presented in this section can be trivially extended to the case of complex-valued matrices by interpreting the sign operation as extracting the phase of a complex number ({\em i.e.,} $\sign(\bm V) = \bm V/|\bm V|$).


\section{Algorithms for computing the mixed $\ell_{1,\infty}$ norm proximal operator\!}
\label{sec:algorithm}
The results in Proposition \ref{prop:optimality} and Corollary \ref{eq:prox_solution} give us the basis for finding an efficient algorithm for computing the mixed $\ell_{1,\infty}$ norm proximal operator. However, the computation of the proximal operator directly from those expressions requires knowledge about the optimal sets $\sM^\star$ and $\{\sJ_i^\star\}^m_{i=1}$, which are not known a priori. In this section we present a procedure for addressing this issue. But first, we describe an efficient pre-processing stage that can be used to reduce the search space for the optimal sets $\sM^\star$ and $\{\sJ_i^\star\}^m_{i=1}$ needed to compute the proximal operator in Proposition \ref{prop:optimality}. The idea is to maximize a lower bound on the mixed $\ell_{1,\infty}$ norm of the optimal solution, which allows us to discard columns that will not be affected by thresholding hence, reducing the search space for the optimal sets $\sM^\star$ and $\{\sJ_i^\star\}^m_{i=1}$. This allows us to effectively reduce the dimensionality of the problem since our algorithm will be then applied to a smaller matrix ({\em i.e.,} a matrix which contains a subset of columns of the original input matrix). After describing a procedure to maximize such lower bound we then propose a general iterative algorithm that uses the results in  Proposition \ref{prop:optimality} and Corollary \ref{eq:prox_solution} to find the right solution. 

\subsection{A lower bound on the norm}
It follows from the analysis presented in Section \ref{sec:prox_operator} that only a subset of the columns of the matrix $\bm{V}$ might be affected by the proximal operator ({\em i.e.,} those with $\ell_1$ norm larger than $t^\star$). This fact can be exploited to reduce the search space of the problem provided that some knowledge about the value of $t^\star$ is available. In particular, having a lower bound on $t^\star$ would allow us to discard columns with smaller $\ell_1$ norm. It turns out that a simple lower bound can be derived from the optimality conditions as stated in the following result:

\begin{lemma}[Lower-bound on the norm]\label{lemma:lb}
Let $\bm X^\star = \prox{\lambda\norm{\cdot}_{1,\infty}}(\bm{V})$ for some $\bm{V}\in\R^{n\times m}$. Let $t^\star = \norm{\bm{X}^\star}_{1,\infty}$ be the mixed $\ell_{1,\infty}$ norm of the optimal solution. Then, for any subset $\sM\subseteq\{1,\ldots,m\}$
	\begin{equation}\label{eq:t_lb}
		t_{\sM} = \frac{1}{|\mathcal{M}|}\big(\sum_{i\in\sM}\norm{\bm{v}_{i}}_1 - n\lambda\big)\leq t^\star .
	\end{equation}
\end{lemma}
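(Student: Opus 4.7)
The plan is to exploit the KKT structure from Proposition \ref{prop:optimality}, in particular equation \eqref{eq:xstar} together with the primal feasibility condition $\mathbf{1}^\T \bm{x}_k^\star \leq t^\star$ and the normalization $\sum_{k=1}^m \mu_k^\star = 1$ coming from \eqref{eq:dLdt}. Since $\|\bm{v}_k\|_1 = \|\bm{u}_k\|_1$, the claim is equivalent to $\sum_{k\in\mathcal{M}} \|\bm{u}_k\|_1 \leq |\mathcal{M}|\,t^\star + n\lambda$, so I will derive a per-column upper bound on $\|\bm{u}_k\|_1$ and then sum it over $\mathcal{M}$.

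The key observation I would use is that for every column $k$,
\[
\|\bm{u}_k\|_1 \;\leq\; t^\star + n\lambda\mu_k^\star.
\]
To see this, split into the two cases from Proposition~\ref{prop:optimality}. If $k\in\mathcal{M}^\star$, then by \eqref{eq:xstar} and the elementwise bound $[a]_+ \geq a$,
\[
t^\star \;=\; \mathbf{1}^\T \bm{x}_k^\star \;=\; \sum_{j=1}^n \bigl[u_{kj} - \lambda\mu_k^\star\bigr]_+ \;\geq\; \|\bm{u}_k\|_1 - n\lambda\mu_k^\star,
\]
which rearranges to the desired inequality. If $k\notin\mathcal{M}^\star$, then $\mu_k^\star = 0$ and, by definition of $\mathcal{M}^\star$, $\|\bm{u}_k\|_1 \leq t^\star$, so the inequality still holds trivially.

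With the per-column bound in hand, I sum over $k\in\mathcal{M}$:
\[
\sum_{k\in\mathcal{M}} \|\bm{u}_k\|_1 \;\leq\; |\mathcal{M}|\,t^\star + n\lambda\sum_{k\in\mathcal{M}} \mu_k^\star \;\leq\; |\mathcal{M}|\,t^\star + n\lambda\sum_{k=1}^m \mu_k^\star \;=\; |\mathcal{M}|\,t^\star + n\lambda,
\]
where the last equality uses the dual-feasibility consequence $\mu_k^\star \geq 0$ together with \eqref{eq:dLdt}. Dividing by $|\mathcal{M}|$ and using $\|\bm{v}_k\|_1 = \|\bm{u}_k\|_1$ yields $t_{\mathcal{M}} \leq t^\star$.

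I do not anticipate any real obstacle: the only subtlety is handling the columns $k\notin\mathcal{M}^\star$ uniformly with those in $\mathcal{M}^\star$, which is dispatched by the trivial case above; after that the proof is just summing a pointwise inequality and invoking $\sum_k \mu_k^\star = 1$.
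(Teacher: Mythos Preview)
Your argument is correct and is essentially the paper's proof: both use $\bm{x}_k^\star=[\bm{u}_k-\lambda\mu_k^\star\1]_+$, the primal feasibility $\1^\T\bm{x}_k^\star\le t^\star$, the elementwise bound $[a]_+\ge a$, and $\sum_k\mu_k^\star\le 1$, then sum over $\sM$. The only cosmetic difference is that you split into $k\in\sM^\star$ and $k\notin\sM^\star$ to derive a per-column bound first, whereas the paper applies the chain of inequalities uniformly (since \eqref{eq:xstar} and primal feasibility hold for every column regardless of membership in $\sM^\star$), making the case distinction unnecessary.
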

\begin{proof}
From the optimality conditions of Problem \eqref{prob:prox_nn} we know that $\bm{x}^\star_i = [\bm{u}_i - \lambda\mu_i^\star\1]_+$, with $\bm{u}_i = |\bm{v}_i|$, and that $\1^\T\bm{x}^\star_i\leq t^\star$. Then, it follows that
\begin{equation}\begin{split}
t^\star 	&\geq \frac{1}{|\sM|}\sum_{i\in\sM}\1^\T\bm{x}^\star_i = \frac{1}{|\sM|}\sum_{i\in\sM}\1^\T[\bm{u}_i - \lambda\mu_i^\star\1]_+\\
			&\geq\frac{1}{|\sM|}\sum_{i\in\sM}\1^\T(\bm{u}_i - \lambda\mu_i^\star\1) = \frac{1}{|\sM|}\sum_{i\in\sM}(\1^\T\bm{u}_i - n\lambda\mu_i^\star)\\
			&\geq\frac{1}{|\sM|}\big(\sum_{i\in\sM}\1^\T\bm{u}_i - n\lambda\big) = \frac{1}{|\sM|}\big(\sum_{i\in\sM}\norm{\bm{v}_i}_1 - n\lambda\big),
\end{split}\end{equation}
where the last inequality follows from the fact that $\sum_{i\in\sM}\mu^\star_i \leq 1$.
\end{proof}

In order to reduce the search space of the problem, we can maximize the upper bound $t_{\sM}$ in \eqref{eq:t_lb} with respect to $\sM$. Since the sum $\sum_{i\in\sM}\norm{\bm{v}_{i}}_1$ is maximized when we choose the columns of $\bm{V}$ with the largest norm, a simple method to compute the set $\sM$ that maximizes $t_{\sM}$ is to sort the columns of $\bm{V}$ according to their $\ell_1$ norm, evaluate the objective for the top $k$ columns, and choose the value of $k$ that maximizes $t_\sM$, as described in Algorithm \ref{alg:max_lb}. Specifically, we form the vector $\bm{w}$ that contains the $\ell_1$ norms of the columns of $\bm{V}$ in decreasing order. From $\bm{w}$ we compute the partial sums $s_k = \sum_{i=1}^k w_i$ for $k=1,\ldots,m$, and evaluate the value of the bound \eqref{eq:t_lb} as described in Algorithm \ref{alg:max_lb} to find its maximizer.

\begin{algorithm}
\caption{Maximizing the lower bound \eqref{eq:t_lb} on $t^\star$}
\label{alg:max_lb}
\begin{algorithmic}[1]
\STATE Input: $(\bm{V},\lambda)$

\STATE Initialization: $\bm{U}\gets |\bm{V}|$ and $\bm{v}\gets \1^\T\bm{U}$

\STATE Sort by $\ell_1$ norm such that $w_1 \geq w_2 \geq\cdots\geq w_m$
$$\bm{w} \gets \operatorname{sort}(\bm{v})$$

\STATE Compute the maximizer
\begin{align*}
	s_k 	&\gets \sum_{i=1}^k w_i,\, k =1,\ldots,m\\
	t 			&\gets \max_{1\leq k\leq m}\big( (s_k-n\lambda)/k\big)
\end{align*}
\STATE Return: $t$
\end{algorithmic}\end{algorithm}

Note however, that while Algorithm \ref{alg:max_lb} allows us to efficiently find a maximum lower bound $t_\sM$ for $t^\star$, depending on the parameter $\lambda$, the maximum lower bound 
might be smaller than zero, in which case it is not useful. In such case, an alternative lower bound for $t^\star$ is given by the following result:
\begin{lemma}\label{lemma:clb}
Let $\bm X^\star = \prox{\lambda\norm{\cdot}_{1,\infty}}(\bm{V})$ for some $\bm{V}\in\R^{n\times m}$. Let $t^\star = \norm{\bm{X}^\star}_{1,\infty}$ be the mixed $\ell_{1,\infty}$ norm of the optimal solution. Then, it holds that
\begin{equation}\label{eq:t_lb2}
	\frac{1}{m}\big(\norm{\bm V}_{\infty,1}-\lambda\big) = \frac{1}{m}\big(\sum_{i=1}^m \norm{\bm v_i}_\infty - \lambda\big) \leq t^\star ,
\end{equation}
\end{lemma}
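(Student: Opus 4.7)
My plan is to derive a per-column bound of the form $\norm{\bm v_k}_\infty \leq t^\star + \lambda \mu_k^\star$ and then sum it over $k$, exploiting the dual-feasibility identity $\sum_{k=1}^m \mu_k^\star = 1$ that comes for free from the KKT stationarity condition \eqref{eq:dLdt} in the proof of Proposition \ref{prop:optimality}. Since $\norm{\bm V}_{\infty,1} = \sum_k \norm{\bm v_k}_\infty$, summing the column-wise bound gives $\norm{\bm V}_{\infty,1} \leq m\, t^\star + \lambda$, which on rearrangement is exactly \eqref{eq:t_lb2}.

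To prove the column-wise inequality I would fix $k$, set $u_{kj} = |v_{kj}|$ as in Proposition \ref{prop:optimality}, and let $j^\star \in \argmax_j u_{kj}$ so that $\norm{\bm v_k}_\infty = u_{k,j^\star}$. From Proposition \ref{prop:optimality} the optimal $k$th column satisfies $\bm x_k^\star = [\bm u_k - \lambda \mu_k^\star \1]_+$, and primal feasibility gives $\1^\T \bm x_k^\star \leq t^\star$. Then I would split on two cases. If $u_{k,j^\star} \leq \lambda \mu_k^\star$, the bound is immediate since $\norm{\bm v_k}_\infty \leq \lambda \mu_k^\star \leq t^\star + \lambda \mu_k^\star$ using $t^\star \geq 0$. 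Otherwise $u_{k,j^\star} > \lambda \mu_k^\star$, so by the definition of $\sJ_k^\star$ we have $j^\star \in \sJ_k^\star$ and $x_{k,j^\star}^\star = u_{k,j^\star} - \lambda \mu_k^\star$ is a single nonnegative summand of $\1^\T \bm x_k^\star$; hence $u_{k,j^\star} - \lambda \mu_k^\star \leq t^\star$, which rearranges to the claim.

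With the column bound in hand, summing over $k = 1,\dots,m$ yields $\norm{\bm V}_{\infty,1} = \sum_k \norm{\bm v_k}_\infty \leq m\, t^\star + \lambda \sum_{k=1}^m \mu_k^\star = m\, t^\star + \lambda$, and dividing by $m$ gives the lemma. The only subtle step is the case split, specifically checking that $j^\star \in \sJ_k^\star$ exactly when $u_{k,j^\star} > \lambda \mu_k^\star$, which is immediate from how $\sJ_k^\star$ is defined in Proposition \ref{prop:optimality}; beyond that I do not anticipate any real obstacle, since the result is a direct consequence of the explicit column-wise form of the prox together with the dual sum-to-one constraint.
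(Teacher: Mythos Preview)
Your proof is correct and follows essentially the same route as the paper: both bound $\1^\T\bm x_k^\star$ below by $\norm{\bm v_k}_\infty - \lambda\mu_k^\star$ using the explicit form $\bm x_k^\star = [\bm u_k - \lambda\mu_k^\star\1]_+$, then sum over $k$ and invoke $\sum_k \mu_k^\star = 1$ from \eqref{eq:dLdt}. The only cosmetic difference is that the paper replaces your case split with the one-line chain $\1^\T[\bm u_k - \lambda\mu_k^\star\1]_+ \geq [u_{k,j^\star} - \lambda\mu_k^\star]_+ \geq u_{k,j^\star} - \lambda\mu_k^\star$, which handles both of your cases at once.
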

\begin{proof}
From \eqref{eq:dLdt} we know that $\sum_{i=1}^m \mu_i^\star = 1$. It also holds that $t^\star \geq \1^\T\bm x_i^\star$, hence
\begin{equation}\begin{split}
m\,t^\star  	&\geq \sum_{i=1}^m\1^\T\bm{x}^\star_i = \sum_{i=1}^m\1^\T[\bm{u}_i - \lambda\mu_i^\star\1]_+\\
			&\geq\sum_{i=1}^m \big(\max_{1\leq j\leq n} u_{ij} - \lambda\mu_i^\star\big) =
			 \sum_{i=1}^m \norm{\bm u_{i}}_\infty - \lambda\sum_{i=1}^m\mu_i^\star\\
			&=\norm{\bm U}_{\infty,1} - \lambda = \norm{\bm V}_{\infty,1} - \lambda.
\end{split}\end{equation}
\end{proof}

Note that the bound in \eqref{eq:t_lb2} will be negative only if the optimal solution is the zero matrix since:
\begin{equation}\begin{split}
	\norm{\bm V}_{\infty,1} < \lambda\; &\Longrightarrow\; \proj{\norm{\cdot}_{\infty,1}\leq\lambda}(\bm V) = \bm V\\
	&\Longrightarrow\; \prox{\lambda\norm{\cdot}_{1,\infty}}(\bm V) = \bm 0.
\end{split}\end{equation}

\subsection{A general algorithm}
A general procedure for computing the proximal operator of the mixed $\ell_{1,\infty}$ norm can be devised based on the optimality conditions of Proposition \ref{prop:optimality} and the observation that, for a fixed $t$, the problem in \eqref{prob:prox_nn} boilds down to projecting the columns of $\bm U$ onto the $\ell_1$ ball of radius $t$. A possible strategy for finding $t^\star$ is to start with a lower bound $t$ for $t^\star$, project each column of $U$ whose $\ell_1$ norm is above the current lower bound onto the $\ell_1$ ball of radius $t$, update the value of the lower bound using \eqref{eq:tstar}, and keep iterating until there are no further changes in $t$ (see Algorithm~\ref{alg:prox_projections}). This algorithm is guaranteed to converge to the optimal solution, as stated next.

\begin{algorithm}
\caption{Proximal operator of mixed $\ell_{1,\infty}$ norm: $\prox{\lambda\norm{\cdot}_1}(\bm V)$}
\label{alg:prox_projections}
\begin{algorithmic}[1]

\STATE Initialization: $\bm U\gets |\bm V|$
\STATE Compute lower bound on $t$
\STATE \textbf{do}
\STATE $\sM$-update: $\sM\gets\big\{i\,|\, t < \norm{\bm{u}_i}_1\big\}$
\FOR{$i\in\sM$}
\STATE Projection onto the simplex: $\bm{x}_i\gets \proj{\norm{\cdot}_1\leq t}(\bm u_i)$
\STATE $\sJ_i$-update: $\sJ_i\gets\big\{j\,|\, x_{ij} > 0 \big\}$
\ENDFOR
\STATE $t$-update: $t = \frac{\sum_{i\in\sM}\frac{1}{|\sJ_i|}\sum_{j\in\sJ_i} u_{ij} - \lambda}{\sum_{i\in\sM}\frac{1}{|\sJ_i|}}$
\STATE \textbf{while} $\sM$ or $\{\sJ_i\}_{i=1}^m$ change
\STATE Compute proximal operator using Corollary \ref{eq:prox_solution}.

\end{algorithmic}
\end{algorithm}

\begin{proposition}[Convergence]\label{prop:convergence}
Algorithm \ref{alg:prox_projections} converges to the proximal operator of the mixed $\ell_{1,\infty}$ norm of matrix $\bm{V}$ in a finite number of iterations.
\end{proposition}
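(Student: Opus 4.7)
The plan is to verify two things: (i) any fixed point of the loop (i.e., an iteration in which $\sM$ and $\{\sJ_i\}$ do not change) produces the true proximal output, and (ii) the algorithm reaches such a fixed point within a bounded number of iterations.

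For (i), at a fixed point the sets and the scalar $t$ are mutually consistent: $\sM$ equals $\{i : \|\bm u_i\|_1 > t\}$, each $\sJ_i$ is the support of $\proj{\|\cdot\|_1\le t}(\bm u_i)$, and $t$ equals the right-hand side of~\eqref{eq:tstar} evaluated on these sets. Setting $\mu_i = (\sum_{j\in\sJ_i} u_{ij} - t)/(\lambda|\sJ_i|)$ for $i\in\sM$ and $\mu_i = 0$ otherwise recovers the Lagrange multipliers, and one verifies by direct calculation---mirroring the proof of Proposition~\ref{prop:optimality}---that primal and dual feasibility, complementary slackness, and Lagrangian stationarity all hold; uniqueness of the prox then identifies the output with $\prox{\lambda\|\cdot\|_{1,\infty}}(\bm V)$.

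For (ii), the state $(\sM, \{\sJ_i\}_{i\in\sM})$ takes values in a finite set and the iteration is deterministic in $t$. The plan is to exhibit strict monotonicity of $t^{(k)}$ whenever the state changes. Define
\[
F(t) := \sum_{i\in\sM(t)} \mu_i^{\mathrm{proj}}(t), \qquad D(t) := \sum_{i\in\sM(t)} 1/|\sJ_i(t)|,
\]
where $\mu_i^{\mathrm{proj}}(t) = (\sum_{j\in\sJ_i(t)} u_{ij} - t)/|\sJ_i(t)|$ is the threshold used when projecting $\bm u_i$ onto the $\ell_1$-ball of radius $t$. A short calculation rewrites the $t$-update of Algorithm~\ref{alg:prox_projections} as $t^{(k+1)} = t^{(k)} + (F(t^{(k)}) - \lambda)/D(t^{(k)})$, i.e., a Newton step for the scalar equation $F(t) = \lambda$ whose unique root is $t^\star$ (by Proposition~\ref{prop:optimality} together with $\sum_i \mu_i^\star = 1$). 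The function $F$ is continuous, non-increasing, piecewise linear with slope $-D(t)$, and---crucially---\emph{convex}, since crossing a piece boundary appends one index to some $\sJ_i$ and thereby raises the slope $-1/|\sJ_i|$ toward zero.

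Starting from the valid lower bound $t^{(0)} \le t^\star$ guaranteed by Lemma~\ref{lemma:lb} or Lemma~\ref{lemma:clb}, convexity of $F$ implies that its tangent at $t^{(k)}$ lies below $F$, so the tangent root $t^{(k+1)}$ satisfies $F(t^{(k+1)}) \ge \lambda$, hence $t^{(k)} \le t^{(k+1)} \le t^\star$. Strict inequality $t^{(k+1)} > t^{(k)}$ holds whenever the current configuration differs from the optimal one, so each non-terminating iteration jumps to a strictly higher linear piece of $F$; with only finitely many pieces, the loop halts after a bounded number of steps. The main obstacle will be the convexity argument for $F$---specifically, checking that $|\sJ_i(t)|$ is non-decreasing in $t$ and that $\mu_i^{\mathrm{proj}}$ remains continuous across piece boundaries so that the slope $-D(t)$ increases monotonically. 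A cruder alternative that avoids convexity is to argue directly that the deterministic map $t \mapsto (\sM(t),\{\sJ_i(t)\}) \mapsto t^{(k+1)}$ cannot revisit any configuration without terminating, which together with the one-sided inequality $t^{(k+1)}\ge t^{(k)}$ (for $t^{(k)}\le t^\star$) and the finiteness of the state space already forces finite termination.
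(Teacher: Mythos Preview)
Your proposal is correct, and its skeleton---monotone $t^{(k)}$ plus finite state space---coincides with the paper's proof. The paper obtains $t^{+}\ge t$ more directly: from $t\le t^\star$ it observes that projecting onto a smaller $\ell_1$-ball requires a larger threshold, so $\mu_i\ge\mu_i^\star$ column by column, whence $\sum_{i\in\sM}\mu_i\ge\sum_i\mu_i^\star=1$; plugging into the same increment identity $t^+ = t + \lambda(\sum_{i\in\sM}\mu_i-1)/D$ (your Newton form with $F=\lambda\sum_i\mu_i$) gives $t^+\ge t$. Your convexity argument for $F$ reaches the same inequality by a slightly different route and additionally delivers $t^{+}\le t^\star$, which the paper leaves implicit; but the extra structure (piecewise linearity, convexity) is not needed for the termination argument, and the ``cruder alternative'' you sketch at the end---no state revisits under strict monotonicity---is essentially the paper's argument verbatim.
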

\begin{proof}
Observe that the algorithm produces a monotonic sequence of values for $t$ that eventually converges to the optimal value $t^\star$.  To see this, note that for a given $t$, the projection onto the $\ell_1$ ball has the form of \eqref{eq:xstar} that is, $\bm x_i = \big[\bm u_i - \lambda\mu_i\1\big]_+$ for some value $\mu_i$. Let $\mu_i^\star$ denote the value at the optimal solution. Now since $t$ is a lower bound on $t^\star$ then it is necessary the case that $\mu_i \geq \mu_i^\star$ (hence $\sJ_i\subseteq \sJ_i^\star$). Let $\sJ_i$ denote the resulting sets after projecting onto the $\ell_1$ ball of radius $t$. The new value $t^+$ is then given by
\begin{equation}\begin{split}
	t^+ 	&= \frac{\sum_{i\in\sM}\frac{1}{|\sJ_i|}\sum_{j\in\sJ_i} u_{ij} - \lambda}{\sum_{i\in\sM}\frac{1}{|\sJ_i|}}\\
	&= \frac{\sum_{i\in\sM}\frac{1}{|\sJ_i|}\sum_{j\in\sJ_i} \big(u_{ij} -\mu_i\lambda +\mu_i\lambda\big) - \lambda}{\sum_{i\in\sM}\frac{1}{|\sJ_i|}}\\
			&= \frac{\sum_{i\in\sM}\frac{1}{|\sJ_i|} t + \mu_i\lambda - \lambda}{\sum_{i\in\sM}\frac{1}{|\sJ_i|}} = t + \lambda\frac{\sum_{i\in\sM}\mu_i-1}{\sum_{i\in\sM}\frac{1}{|\sJ_i|}}\geq t,
\end{split}\end{equation}
where the last inequality follows from the fact that $\mu_i\geq\mu_i^\star$ and $\sum \mu_i^\star = 1$ (see \eqref{eq:dLdt}). In fact, the inequality is strict and it is satisfied with equality only when $t=t^\star$ since in that case the optimality conditions of Proposition \ref{prop:optimality} are satisfied. Note that a change in $t$ can only happen if there is a change in the sets $\sJ_i$ or $\sM$. Since the number of possible sets is finite and due to the monotonicity of the values of $t$ the algorithm terminates in a finite number of iterations.
\end{proof}


\section{Numerical experiments}\label{sec:experiments}

\subsection{Complexity and implementation}
The complexity of Algorithm \ref{alg:prox_projections} depends on the method used for computing the projection step onto the simplex and there exist different alternatives in the literature \cite{condat16mp}. A naive implementation of the proposed algorithm can lead to a computationally inefficient method if at every iteration the projection step is computed from scratch. Alternatively, one could exploit previous estimates from one iteration to the next in order to improve the computational efficiency. In this paper we propose two different approaches for computing the projection step onto the simplex: the first one is based on sorting the columns of the matrix of absolute values that are affected by thresholding. In such case, the expected complexity of the method is dominated by the sorting operation and it is of $O(mn\log n)$ operations. The second approach is based on an active set method based on Proposition \ref{prop:optimality}. In fact, the projection onto the simplex part is equivalent to the one in \cite{michelot86ota}. While in the latter case the complexity analysis is not straightforward, we have experimentally observed it to be more efficient than the sorting-based one in most of the tested scenarios.

\subsection{Numerical validation}
In order to evaluate the computational complexity of the proposed algorithms we randomly generate matrices in $\mathbb{R}^{n\times m}$ with independent and identically distributed random entries drawn from a uniform distribution $\mathcal{U}([-0.5,0.5])$. We then apply the proposed implementations of Algorithm \ref{alg:prox_projections} and label them ``Sort'' for the sorting-based implementation and ``Active Set'' for the one based on active sets to compute projections onto the mixed $\ell_{\infty,1}$ ball for different radius values. We also compute the projections using the state of the art algorithms. In particular we compare to the method proposed in \cite{quattoni-etal09icml} which we denote as ``QT'' and with the recently proposed root-search based methods of \cite{chau-etal18icassp,chau-etal19siam-is} which we denote as ``ST'' (Steffensen) and ``NT'' (Newton), respectively. We record the execution time for different configurations (sizes) of the data matrix and for different values of the $\ell_{\infty,1}$ ball radius. In our experiments, we choose the radius of the ball to be a fraction $\alpha\in[0,1]$ of the true mixed norm of the matrix and compute the average computation time over $100$ realizations. For the methods in \cite{quattoni-etal09icml,chau-etal18icassp,chau-etal19siam-is} we use the implementations provided by the authors. The results for different matrix sizes are displayed in Table \ref{tab:timing}. As it can be observed from the table, our two implementations achieve the best performance offering an improvement over the state of the art that ranges between one and two orders of magnitude.

\begin{table*}[h]
\centering
\caption{Average execution time of the different methods in computing the projection onto the $\ell_{\infty,1}$ ball. The computation time corresponds to an average over $100$ realizations.}
\begin{tabular}{ccccccc}
\toprule
Size &$\alpha$	&ST	&NT &QT	&Sort	&Active Set\\
\toprule
\multirow{4}{*}{$100\times100$}
&$10^{-4}$ & 3.51E-03 & 3.46E-03 & 1.69E-03 & 1.03E-04 & \textbf{ 9.61E-05}\\
&$10^{-3}$ & 1.33E-02 & 9.17E-03 & 1.70E-03 & 1.44E-04 & \textbf{ 1.35E-04}\\
&$10^{-2}$ & 5.11E-02 & 3.04E-02 & 1.71E-03 & 3.74E-04 & \textbf{ 3.64E-04}\\
&$10^{-1}$ & 5.49E-02 & 3.65E-02 & 1.73E-03 & 7.02E-04 & \textbf{ 6.94E-04}\\
\midrule
\multirow{4}{*}{$1000\times100$}
&$10^{-4}$ & 9.23E-03 & 8.42E-03 & 1.97E-02 & 1.06E-03 & \textbf{ 1.04E-03}\\
&$10^{-3}$ & 3.81E-02 & 2.69E-02 & 1.96E-02 & 2.49E-03 & \textbf{ 2.47E-03}\\
&$10^{-2}$ & 1.19E-01 & 7.80E-02 & 1.96E-02 & \textbf{ 7.98E-03} & 8.01E-03\\
&$10^{-1}$ & 1.16E-01 & 8.21E-02 & 1.96E-02 & \textbf{ 9.25E-03} & 9.28E-03\\
\midrule
\multirow{4}{*}{$100\times1000$}
&$10^{-4}$ & 8.60E-02 & 6.25E-02 & 1.94E-02 & 8.88E-04 & \textbf{ 8.63E-04}\\
&$10^{-3}$ & 4.46E-01 & 2.60E-01 & 1.94E-02 & 1.28E-03 & \textbf{ 1.26E-03}\\
&$10^{-2}$ & 4.88E-01 & 3.19E-01 & 1.96E-02 & 3.56E-03 & \textbf{ 3.55E-03}\\
&$10^{-1}$ & 6.01E-01 & 3.96E-01 & 1.98E-02 & 6.82E-03 & \textbf{ 6.82E-03}\\
\midrule
\multirow{4}{*}{$1000\times1000$}
&$10^{-4}$ & 4.48E-01 & 3.28E-01 & 2.41E-01 & 1.11E-02 & \textbf{ 1.09E-02}\\
&$10^{-3}$ & 1.41E+00 & 9.23E-01 & 2.40E-01 & \textbf{ 2.61E-02} & 2.62E-02\\
&$10^{-2}$ & 1.60E+00 & 1.14E+00 & 2.40E-01 & 8.24E-02 & \textbf{ 8.22E-02}\\
&$10^{-1}$ & 1.61E+00 & 1.17E+00 & 2.40E-01 & \textbf{ 9.56E-02} & 9.56E-02\\
\midrule
\multirow{4}{*}{$10000\times1000$}
&$10^{-4}$ & 1.55E+01 & 1.01E+01 & 3.41E+00 & 1.18E-01 & \textbf{ 1.18E-01}\\
&$10^{-3}$ & 1.53E+01 & 1.06E+01 & 3.30E+00 & \textbf{ 2.62E-01} & 2.63E-01\\
&$10^{-2}$ & 1.83E+01 & 1.31E+01 & 3.33E+00 & \textbf{ 8.31E-01} & 8.31E-01\\
&$10^{-1}$ & 1.97E+01 & 1.42E+01 & 3.33E+00 & \textbf{ 9.64E-01} & 9.64E-01\\
\bottomrule
\label{tab:timing}
\end{tabular}
\end{table*}

\subsection{Application to cancer classification from gene expression data}
In this section we test our algorithms in the context of multi-task learning for the problem of cancer classification from gene expression data where the dimensionality of the feature vectors $m$ is typically much larger than the number of samples $p$. We use the datasets provided in \cite{nie-etal10neurips} which consist of five curated datasets of different types of cancers as described in \cite{nie-etal10neurips}. The datasets are briefly summarized in Table \ref{tab:datasets}. We pose the classification problem as a multi-task learning problem. In particular, given a dataset of points with associated labels $\mathcal{D}=\big\{(\bm x_i, c_i)\big\}_{i=1}^p$, with $\bm x_i \in \mathbb{R}^m$ and $c_i \in \{0,\ldots,n\}$, where $n$ is the number of classes, we build a data matrix $\bm X = [\bm x_1,\ldots, \bm x_p]^T$ and target label matrix $\bm Y = [\bm y_1,\ldots,\bm y_p]^T$ with
\begin{equation}
\bm y_i = [y_{i1},\ldots,y_{in}]^\T,\quad y_{ij} = \left\{\begin{array}{lc}1\; &j=c_i\\0\; &\textrm{else}\end{array}\right..
\end{equation}

\begin{table}[h]
\centering
\caption{Characterization summary of the used datasets, see \cite{nie-etal10neurips}.}
\begin{tabular}{lccc}
\toprule
Dataset &Classes $n$ &Samples $p$ &Dimension $m$\\
\midrule
Carcinom	\cite{su-etal01cr,yang-etal06bmcbi} &11 &174 &9182\\
GLIOMA \cite{nutt03cr}	&4 &50 &4434\\
LUNG \cite{bhattacharjee-etal01pnas}	&5 & 203 & 3312\\
ALLAML \cite{fodor97science}	&2 & 72 &7129\\
Prostate-GE \cite{singh-etal02cancercell} &2 & 102 & 5966\\
\bottomrule
\label{tab:datasets}
\end{tabular}
\end{table}

The problem is to predict the correct label for each class while enforcing feature sharing among them:
\begin{equation}\label{prob:feature_selection}
\begin{array}{ll}
\minimize{\bm W} &\lVert \bm Y - \bm X\bm W^T\rVert_F^2\\
\st	&\lVert \bm W \rVert_{\infty,1} \leq \tau
\end{array}.
\end{equation}
Note that problem \eqref{prob:feature_selection} falls within the family of problems in \eqref{prob:model_problem} which can be solved using a projected gradient descent strategy. For the projection step onto the $\ell_{\infty,1}$ ball we use the sorting-based implementation of Algorithm \ref{alg:prox_projections}.

We conducted an experiment using the datasets of Table \ref{tab:datasets} where we center the data points (mean subtraction) and normalize them by dividing each coordinate by its standard deviation. For each dataset we split the data into $80\%$ training and $20\%$ testing and computed the average classification performance over $100$ random data splits. Once we solve \eqref{prob:feature_selection} we use the following simple classification rule:

\begin{equation}\label{eq:classification_rule}
\hat c_i = \argmax_{1\leq j\leq n}\; \hat y_{ij},\quad \hat{\bm Y} = \bm X\bm W^T = [\hat{\bm y}_1,\ldots,\hat{\bm y}_p]^T.
\end{equation}

In addition, we use the learned weights to identify relevant features and train a (kernel) support vector machine (SVM) classifier on the identified features. Features are sorted according to the Euclidean norm of the columns of $\bm W$ being the most relevant index the one with larger norm. For the multi-class problem we use a {\em one-versus-one} strategy with majority voting. We also provide a comparison with the $\ell_{2,1}$ norm based feature selection method of \cite{nie-etal10neurips} for which we used the implementation provided by the authors. The $\ell_{\infty,1}$ ball radius $\tau$ in \eqref{prob:feature_selection} as well as the regularization parameter for the method in \cite{nie-etal10neurips} were chosen using a grid search. The average classification accuracy of both methods the classification rule \eqref{eq:classification_rule} are summarized in Table \ref{tab:accuracy}. As it can be appreciated we observe that the proposed method using the $\ell_{\infty,1}$ norm achieves better classification accuracy than the method based on the $\ell_{2,1}$ proposed in \cite{nie-etal10neurips}. It is important to note that the differences are more pronounced in multi-class problems than in binary ones indicating as expected, that the $\ell_{\infty,1}$ norm encourages the discovery of variables that are most correlated.

\begin{table*}[h]
\centering
\caption{Average classification accuracy using criterion \eqref{eq:classification_rule}.}
\begin{tabular}{lccccc}
\toprule
Dataset &Carcinom &GLIOMA &LUNG &ALLAML &Prostate-GE\\
					&[11 classes] &[4 classes] &[5 classes] &[2 classes] &[2 classes]\\
\midrule
$\ell_{2,1}$ (Nie et al.)	&95.50	&68.90 &76.95 &92.36 &93.25\\
$\ell_{\infty,1}$ (Proposed) &\textbf{97.74} &\textbf{78.50} &\textbf{83.28} &\textbf{95.07} &\textbf{93.65}\\
\bottomrule
\label{tab:accuracy}
\end{tabular}
\end{table*}

We also report the classification results using an SVM classifier and for different number of features used. The results are displayed in Fig. \ref{fig:svm_results} for all datasets. We can observe the superior performance of the proposed scheme in selecting relevant features for the discrimination task. Again the performance gap is generally more pronounced on those datasets with more than two classes. 

\begin{figure*}
\includegraphics[width=\linewidth]{./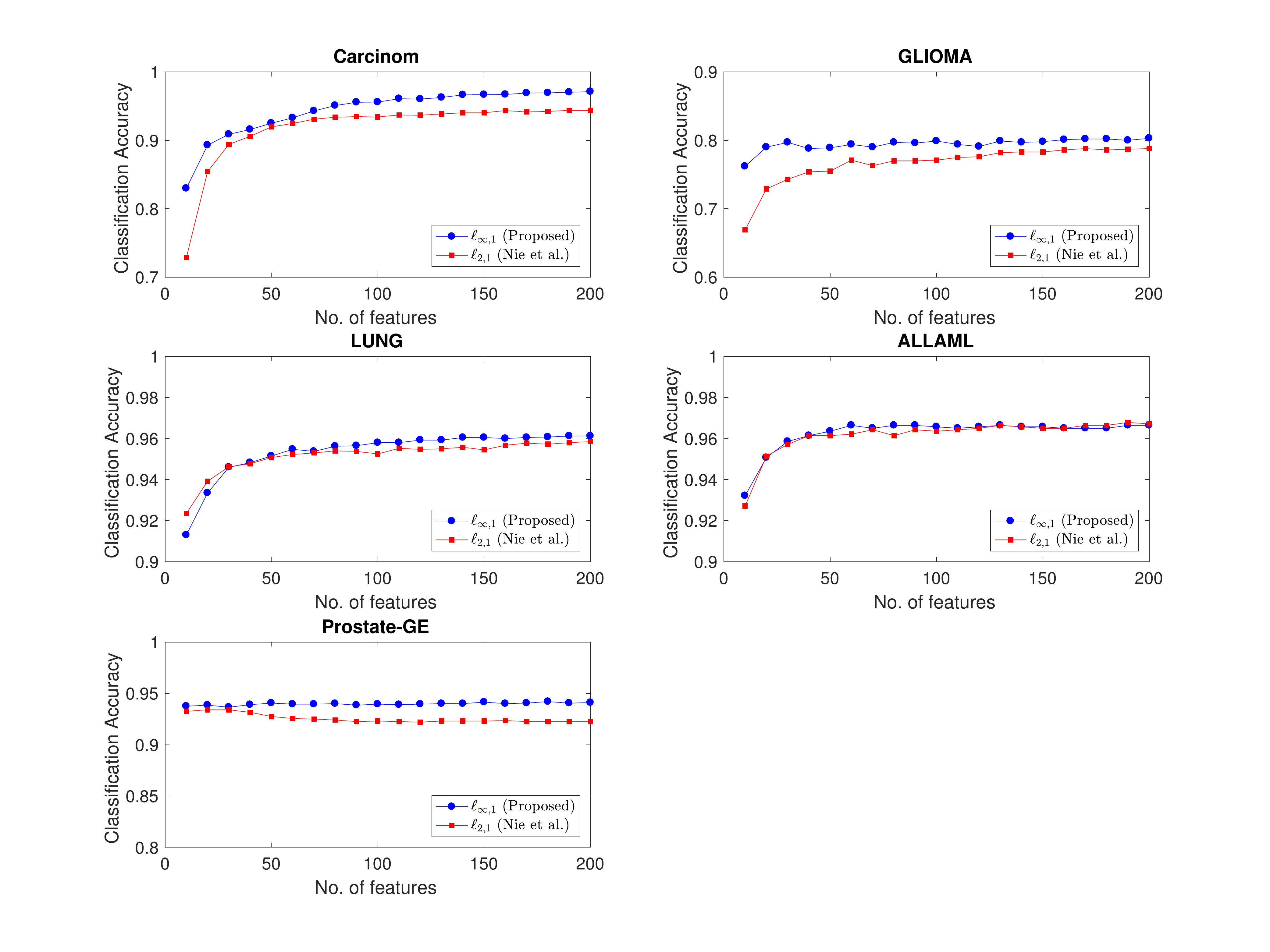}
\caption{Average classification results using an SVM classifier as a function of the number of features selected.}
\label{fig:svm_results}
\end{figure*}


\section{Conclusions}\label{sec:conclusions}
In this paper we have analyzed in detail the proximity operator of the mixed $\ell_{1,\infty}$ matrix norm. We have provided simple expressions for its computation that generalize the well-known soft-thresholding algorithm. By exploiting the duality relationship to the $\ell_{\infty,1}$ norm we also derive the projection operator onto the mixed $\ell_{\infty,1}$ norm. In addition, we have proposed a general algorithm for the computation of the proximal operator and two particular implementations that can be orders of magnitude faster than the state of the art making them particularly suitable for large-scale problems. We have also illustrated the application of the $\ell_{\infty,1}$ norm for biomarker discovery (feature selection) for the problem of cancer classification from gene expression data.

\IEEEpeerreviewmaketitle

\section*{Acknowledgment}
This research was supported in part by the Northrop Grumman
Mission Systems Research in Applications for Learning Machines
(REALM) initiative.

\ifCLASSOPTIONcaptionsoff
  \newpage
\fi

\balance



\end{document}